\newtheorem{theorem}{Theorem}
\newtheorem{lemma}[theorem]{Lemma}
\newtheorem{corollary}[theorem]{Corollary}
\newtheorem{assumption}[theorem]{Assumption}
\newcommand{\A}{\tilde{A}_\text{rw}}
\renewcommand{\L}{\tilde{L}_\text{rw}}
\newcommand{\E}{\mathbb{E}}
\renewcommand{\vec}{\mathbf}
\title{Revisiting Graph Neural Networks:\\All We Have is Low-Pass Filters}
\author{%
  Hoang NT\thanks{gihub.com/gear/gfnn}\\
  Tokyo Institute of Technology, RIKEN\\
  \texttt{hoang.nguyen.rh@riken.jp} \\
  \And
  Takanori Maehara \\
  RIKEN \\
  \texttt{takanori.maehara@riken.jp} \\
  % examples of more authors
  % Coauthor \\
  % Affiliation \\
  % Address \\
  % \texttt{email} \\
  % \AND
  % Coauthor \\
  % Affiliation \\
  % Address \\
  % \texttt{email} \\
  % \And
  % Coauthor \\
  % Affiliation \\
  % Address \\
  % \texttt{email} \\
  % \And
  % Coauthor \\
  % Affiliation \\
  % Address \\
  % \texttt{email} \\
}
\begin{document}

\definecolor{ror}{HTML}{e76f51}
\definecolor{bor}{HTML}{f4a261}
\definecolor{yel}{HTML}{e9c46a}
\definecolor{gre}{HTML}{2a9d8f}
\definecolor{blu}{HTML}{264653} 

\maketitle

\begin{abstract}
 
  Graph neural networks have become one of the most important techniques to solve machine learning problems on graph-structured data. Recent work on vertex classification proposed deep and distributed learning models to achieve high performance and scalability. However, we find that the feature vectors of benchmark datasets are already quite informative for the classification task, and the graph structure only provides a means to denoise the data. In this paper, we develop a theoretical framework based on graph signal processing for analyzing graph neural networks. Our results indicate that graph neural networks only perform low-pass filtering on feature vectors and do not have the non-linear manifold learning property. We further investigate their resilience to feature noise and propose some insights on GCN-based graph neural network design.
  
 % \textcolor{gre}{[HN: Should we mention graph signal processing 
 % techniques here?]}
 % \textcolor{red}{[TM: I think so. It would be suffice to mention like ``... using graph signal processing'']}

\end{abstract}

\section{Introduction}

Graph neural networks (GNN) belong to a class of neural networks which can learn from graph-structured data. Recently, graph neural networks for vertex classification and graph isomorphism test have achieved excellent results on several benchmark datasets and continuously set new state-of-the-art performance~\cite{abu2019mixhop,gcn,gat,gin}. Started with the early success of ChebNet~\cite{chebynets} and GCN~\cite{gcn} at vertex classification, many variants of GNN have been proposed to solve problems in social networks~\cite{graphsage, gaan}, biology~\cite{gat, dgi}, chemistry~\cite{fout2017protein,gilmer2017neural}, natural language processing~\cite{bastings2017graph,zhang2018graph}, computer vision~\cite{santoro2017simple}, and weakly-supervised learning~\cite{garcia2017few}.

In semi-supervised vertex classification, we observe that the parameters of a graph convolutional layer in a Graph Convolutional Network (GCN)~\cite{gcn} only contribute to overfitting. Similar observations have been reported in both simple architectures such as SGC~\cite{wu2019simplifying} and more complex ones such as DGI~\cite{dgi}. Based on this phenomenon, \citet{wu2019simplifying} proposed to view graph neural networks simply as feature propagation and propose an extremely efficient model with state-of-the-art performance on many benchmark datasets. \citet{kawamoto2018mean} made a related theoretical remark on untrained GCN-like GNNs under graph partitioning settings. From these previous studies, a question naturally arises: \emph{Why and when do graph neural networks work well for vertex classification?} In other words, is there a condition on the vertex feature vectors for graph neural network models to work well even without training? Consequently, can we find realistic counterexamples in which baseline graph neural networks (e.g. SGC or GCN) fail to perform? 

In this study, we provide an answer to the aforementioned questions from the graph signal processing perspective~\cite{ortega2018graph}. Formally, we consider a semi-supervised learning problem on a graph. Given a graph $\mathcal{G} = (\mathcal{V}, \mathcal{E})$, each vertex $i \in \mathcal{V}$ has a feature $\vec{x}(i) \in \mathcal{X}$ and label $y(i) \in \mathcal{Y}$, where $\mathcal{X}$ is a $d$-dimensional Euclidean space $\mathbb{R}^d$ and $\mathcal{Y} = \mathbb{R}$ for regression and $\mathcal{Y} = \{1, \dots, c\}$ for classification. The task is to learn a hypothesis to predict the label $y(i)$ from the feature $\vec{x}(i)$. We then characterize the graph neural networks solution to this problem and provide insights to the mechanism underlying the most commonly used baseline model GCN~\cite{gcn}, and its simplified variant SGC~\cite{wu2019simplifying}.

\begin{figure}
\centering
\begin{minipage}{.5\textwidth}
  \centering
  \includegraphics[width=0.725\linewidth]{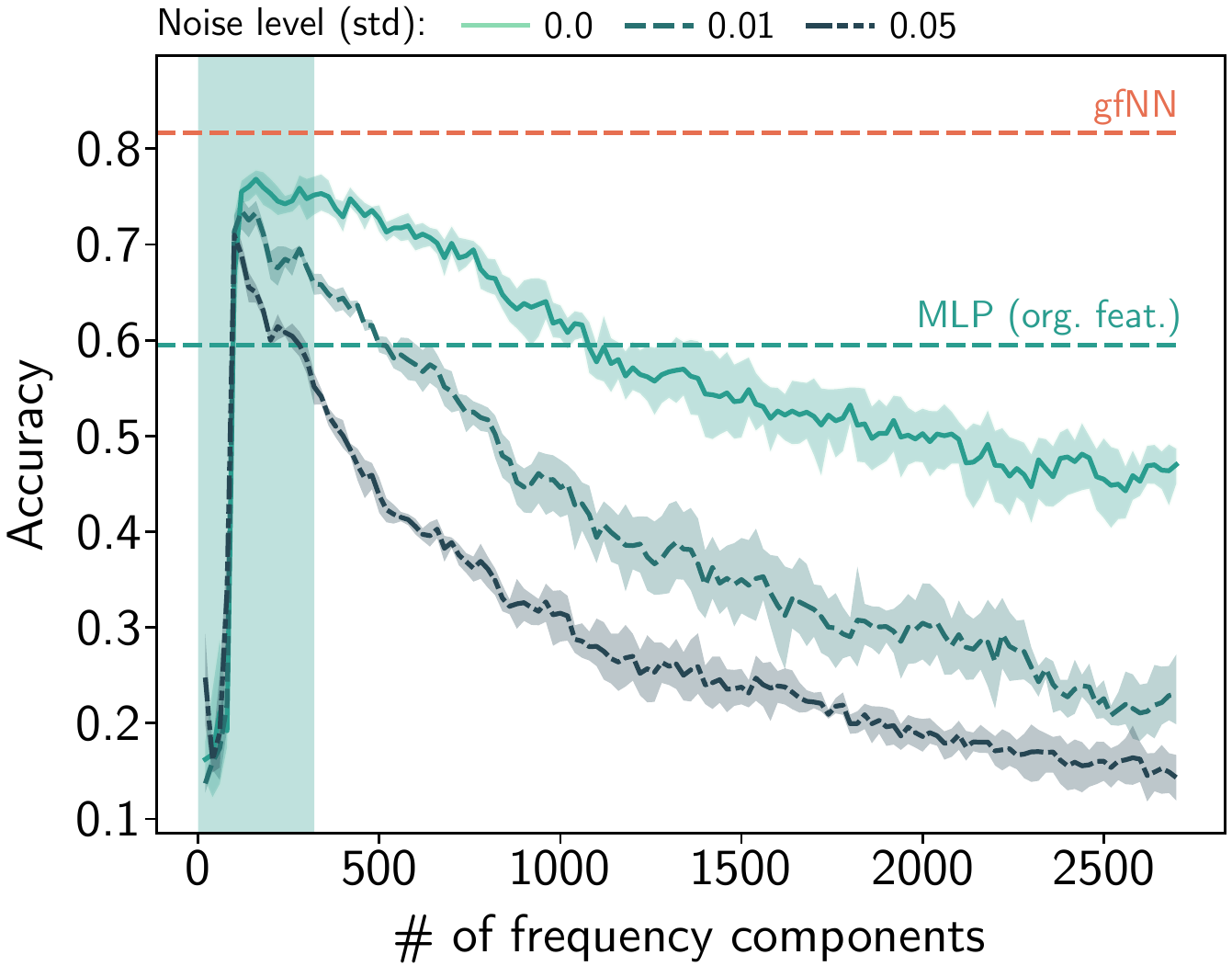}
  \captionof{figure}{Accuracy by frequency components}
  \label{f:low-freq-cora}
\end{minipage}%
\begin{minipage}{.5\textwidth}
  \centering
  \includegraphics[width=0.7\linewidth]{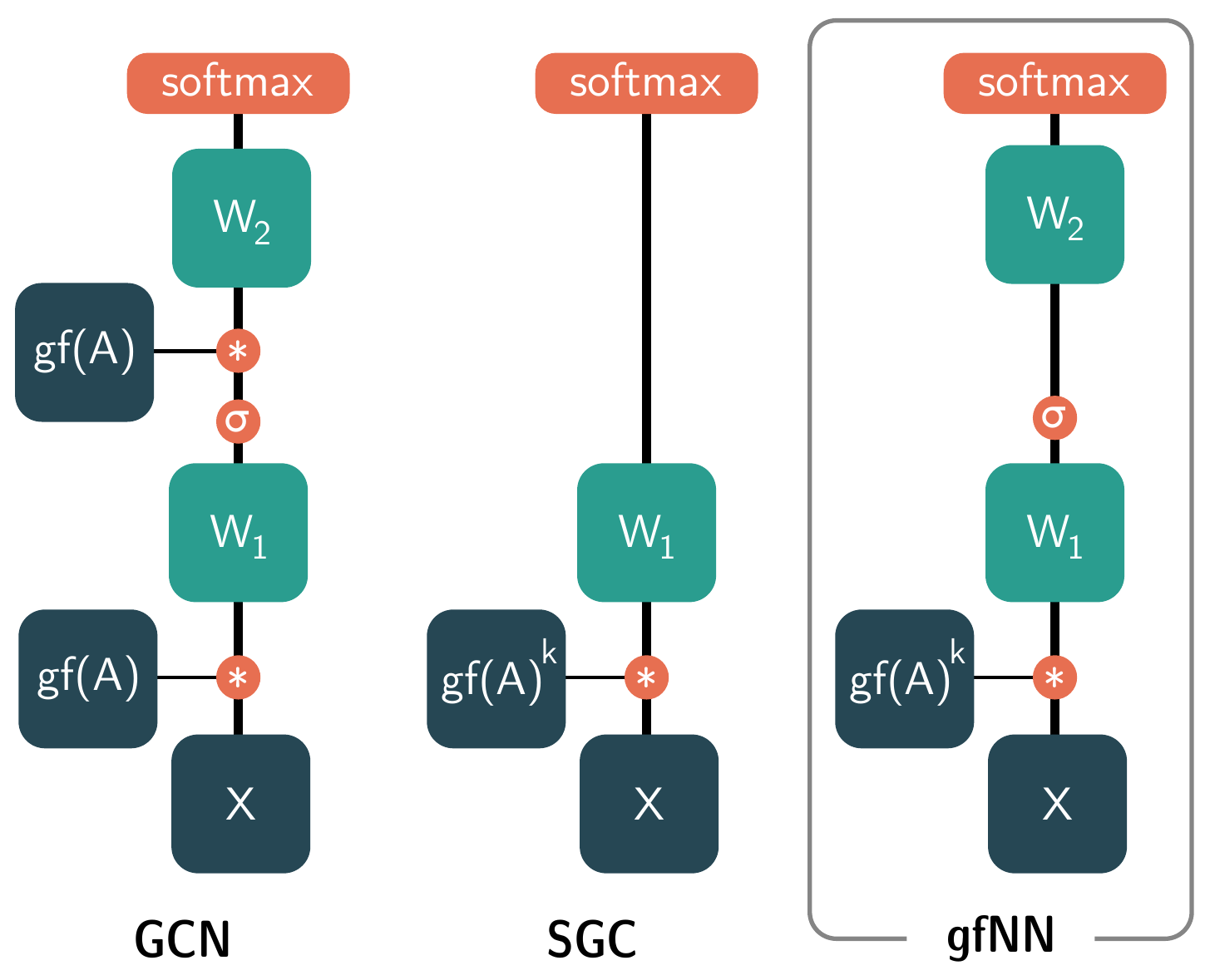}
  \captionof{figure}{A simple realization of gfNN}
  \label{f:gfnn}
\end{minipage}
\vspace{-1.5em}
\end{figure}

Graph signal processing (GSP) regards data on the vertices as signals and applies signal processing techniques to understand the signal characteristics. By combining signals (feature vectors) and graph structure (adjacency matrix or its transformations), GSP has inspired the development of learning algorithms on graph-structured data~\cite{shuman2012emerging}. In a standard signal processing problem, it is common to assume the observations contain some noise and the underlying ``true signal'' has low-frequency~\cite{rabiner1975theory}. Here, we pose a similar assumption for our problem.

\begin{assumption}
\label{asmp:low-frequency}
Input features consist of low-frequency true features and noise. The true features have sufficient information for the machine learning task.
\end{assumption}

\textbf{\textcolor{gre}{Our first contribution}} is to verify Assumption~\ref{asmp:low-frequency} on commonly used datasets (Section~\ref{sec:low-freq}). Figure~\ref{f:low-freq-cora} shows the performance of 2-layers perceptrons (MLPs) trained on features with different numbers of frequency components. In all benchmark datasets, we see that only a small number of frequency components contribute to learning. Adding more frequency components to the feature vectors only decreases the performance. In turn, the classification accuracy becomes even worse when we add Gaussian noise $\mathcal{N}(0, \sigma^2)$ to the features.

Many recent GNNs were built upon results from graph signal processing. The most common practice is to multiply the \emph{(augmented) normalized adjacency} matrix $I - \tilde{\mathcal{L}}$ with the feature matrix~$\mathcal{X}$. The product $(I - \tilde{\mathcal{L}})\mathcal{X}$ is understood as features averaging and propagation~\cite{graphsage,gcn,wu2019simplifying}. In graph signal processing literature, such operation filters signals on the graph without explicitly performing eigendecomposition on the normalized Laplacian matrix, which requires $O(n^3)$ time~\cite{vaseghi2008advanced}. Here, we refer to this augmented normalized adjacency matrix and its variants as graph filters and propagation matrices interchangeably.

\textbf{\textcolor{gre}{Our second contribution}} shows that multiplying graph signals with propagation matrices corresponds to low-pass filtering (Section~\ref{sec:lowpass}, esp., Theorem~\ref{thm:spectrum-shrinking}). Furthermore, we also show that the matrix product between the observed signal and the low-pass filters is the analytical solution to the true signal optimization problem. In contrast to the recent design principle of graph neural networks~\cite{abu2019mixhop,gcn,gin}, our results suggest that the graph convolution layer is simply low-pass filtering. Therefore, learning the parameters of a graph convolution layer is unnecessary. \citet[Theorem~1]{wu2019simplifying} also addressed a similar claim by analyzing the spectrum-truncating effect of the augmented normalized adjacency matrix. We extend this result to show that all eigenvalues monotonically shrink, which further explains the implications of the spectrum-truncating effect.

Based on our theoretical understanding, we propose a new baseline framework named \emph{gfNN (graph filter neural network)} to empirically analyze the vertex classification problem. gfNN consists of two steps: 1. Filter features by multiplication with graph filter matrices; 2. Learn the vertex labels by a machine learning model. We demonstrate the effectiveness of our framework using a simple realization model in Figure~\ref{f:gfnn}.

\textbf{\textcolor{gre}{Our third contribution}} is the following Theorem:
\begin{theorem}[Informal, see Theorem~\ref{thm:MLP-vs-gfNN}, \ref{thm:MLP-vs-GCN}]
\label{thm:main}
Under Assumption~\ref{asmp:low-frequency}, the outcomes of SGC, GCN, and gfNN are similar to those of the corresponding NNs using true features.
\end{theorem}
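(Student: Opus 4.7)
The plan is to prove the two halves (gfNN/SGC and GCN) as consequences of a single quantitative bound relating the filtered observation to the true signal, combined with Lipschitz continuity of whatever neural-network layers follow the filter.

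First, I would write the observed feature matrix as $\mathcal{X}=\mathcal{X}^{*}+\eta$, where $\mathcal{X}^{*}$ lives (approximately) in the span of the bottom eigenvectors of $\tilde{L}$ and $\eta$ is arbitrary noise. Let $\tilde{A}=I-\tilde{L}$. By Theorem~\ref{thm:spectrum-shrinking}, the eigenvalues of $\tilde{A}$ lie in $[-1,1]$ with the high-frequency ones strictly bounded away from $1$ in modulus and monotonically shrinking under powers. Expanding in the graph-Fourier basis,
\[
\tilde{A}^{k}\mathcal{X}-\mathcal{X}^{*}=\bigl(\tilde{A}^{k}-I\bigr)\mathcal{X}^{*}+\tilde{A}^{k}\eta,
\]
where the first term is small because $\mathcal{X}^{*}$ is low-frequency (the relevant eigenvalues of $\tilde{A}$ are close to $1$), and the second term is small because the high-frequency components of $\eta$ are damped by factors $|\lambda|^{k}$ with $|\lambda|<1$. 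This produces a uniform bound $\|\tilde{A}^{k}\mathcal{X}-\mathcal{X}^{*}\|_{F}\le \varepsilon_{k}(\mathcal{X}^{*})+\rho_{k}\|\eta\|_{F}$, with $\varepsilon_{k}\to0$ as $\mathcal{X}^{*}$ becomes strictly band-limited and $\rho_{k}\to0$ as $k$ grows, both controlled by the spectral gap.

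For gfNN and SGC the rest is immediate. The map applied after the filter is a fixed MLP (a single linear layer followed by softmax for SGC), so on any bounded input region it is Lipschitz with some constant $L$. The \emph{corresponding NN} in the theorem statement is simply this same post-filter map evaluated on $\mathcal{X}^{*}$, giving
\[
\|\text{gfNN}(\mathcal{X})-\text{MLP}(\mathcal{X}^{*})\|_{F}\le L\bigl(\varepsilon_{k}(\mathcal{X}^{*})+\rho_{k}\|\eta\|_{F}\bigr),
\]
which is the content of Theorem~\ref{thm:MLP-vs-gfNN}. The GCN statement is more delicate because the filter and nonlinearity are interleaved, $H^{(l+1)}=\sigma(\tilde{A}H^{(l)}W^{(l)})$. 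I would argue by induction on depth, with the hypothesis that $H^{(l)}\approx H^{*(l)}$, where $H^{*(l)}$ is the hidden representation of a standard MLP with the \emph{same} weights applied to $\mathcal{X}^{*}$. Using that $\sigma$ is $1$-Lipschitz, that each $W^{(l)}$ has bounded operator norm, and that $\|\tilde{A}\|_{\mathrm{op}}\le 1$,
\[
\|H^{(l+1)}-H^{*(l+1)}\|_{F}\le\|W^{(l)}\|_{\mathrm{op}}\bigl(\|H^{(l)}-H^{*(l)}\|_{F}+\|(\tilde{A}-I)H^{*(l)}\|_{F}\bigr),
\]
after splitting $\tilde{A}H^{(l)}-H^{*(l)}=\tilde{A}(H^{(l)}-H^{*(l)})+(\tilde{A}-I)H^{*(l)}$.

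The main obstacle is exactly the last term: $H^{*(l)}$ is not automatically band-limited on the graph, because the weights and the pointwise nonlinearity can in principle mix low- and high-frequency components across the feature dimension. I would handle this by introducing a perturbation variable $\eta^{(l)}:=H^{(l)}-H^{*(l)}$ and tracking the recursion
\[
\eta^{(l+1)}=\sigma\bigl(\tilde{A}H^{(l)}W^{(l)}\bigr)-\sigma\bigl(H^{*(l)}W^{(l)}\bigr),
\]
for which the $1$-Lipschitzness of $\sigma$ together with the spectral contraction of $\tilde{A}$ yields a recursion whose solution stays bounded by a constant multiple of $\varepsilon_{k}+\rho_{k}\|\eta\|_{F}$, assuming bounded weight norms at each layer. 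Making this bookkeeping quantitative --- in particular, showing that the nonlinearity never amplifies high-frequency content faster than the filter suppresses it --- is the subtle step, and it is where I expect the hypothesis on band-limited $\mathcal{X}^{*}$ and the spectrum-shrinking theorem are used most heavily.
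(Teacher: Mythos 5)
Your overall strategy --- split the error into a bias term $(\A^{k}-I)\bar{X}$ and a variance term $\A^{k}Z$, then push the bound through the Lipschitz post-filter map --- is the same skeleton the paper uses for Theorem~\ref{thm:MLP-vs-gfNN}. However, your treatment of the noise term has a genuine gap. You claim $\|\A^{k}\eta\|$ is small ``because the high-frequency components of $\eta$ are damped by factors $|\lambda|^{k}<1$,'' with $\eta$ \emph{arbitrary}. This is false in general: if $\eta$ happens to be concentrated on low frequencies, $\A^{k}\eta\approx\eta$ and nothing is damped. The paper avoids this by Assumption~\ref{asmp:low-frequency-precise} (the noise is \emph{white} in the graph Fourier domain), so that the expected noise energy is spread uniformly over all $n$ frequencies and only the average of $(1-\lambda)^{2k}$ over the spectrum survives; Lemma~\ref{lem:bias-variance} identifies this average as the random-walk return probability $R(2k)=\mathrm{tr}(\A^{2k})/n$ and controls the deviation via a $\chi^{2}$ concentration inequality. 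Your deterministic constant ``$\rho_{k}$ controlled by the spectral gap'' cannot be made to go to zero without some such distributional assumption on $\eta$.

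The second, and more serious, gap is in the GCN half. You correctly identify the obstacle --- after one layer, $H^{*(1)}=\sigma(\bar{X}W_{1})$ is not obviously band-limited, so the term $\|(\A-I)H^{*(1)}\|$ in your recursion is uncontrolled --- but you then assert without proof that the recursion ``stays bounded,'' which is exactly the point at issue. The paper closes this hole with Lemma~\ref{lem:laplacian-times-activated-low-frequecy}: since ReLU is $1$-Lipschitz entrywise, it cannot increase the graph Dirichlet energy, i.e.\ $\sum_{(u,v)\in E}\|\sigma(x(u))-\sigma(x(v))\|_{2}^{2}\le\sum_{(u,v)\in E}\|x(u)-x(v)\|_{2}^{2}$; combined with a Markov-type argument on the spectral mass, this shows that an $\epsilon$-band-limited input is mapped by $\sigma$ to within $O(\epsilon^{1/4})\|\bar X\|_{D}$ of a $\sqrt{\epsilon}$-band-limited signal. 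That is the one idea your sketch is missing, and it is what produces the weaker $\tilde{O}(\sqrt[4]{\epsilon})$ rate in Theorem~\ref{thm:MLP-vs-GCN} versus $\tilde{O}(\sqrt{\epsilon})$ for gfNN --- a distinction your proposal does not surface. (Also note the paper only treats a two-layer GCN with a single interleaving of $\A$ and $\sigma$, not a general depth-$l$ induction, so no further bookkeeping is needed once this lemma is in hand.)
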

Theorem~\ref{thm:MLP-vs-gfNN} implies that, under Assumption~\ref{asmp:low-frequency}, both gfNN and GCN~\cite{gcn} have similar high performance. Since gfNN does not require multiplications of the adjacency matrix during the learning phase, it is much faster than GCN. In addition, gfNN is also more noise tolerant.

Finally, we compare our gfNN to the SGC model~\cite{wu2019simplifying}. While SGC is also computationally fast and accurate on benchmark datasets, our analysis implies it would fail when the feature input is nonlinearly-separable because the graph convolution part does not contribute to non-linear manifold learning. We created an artificial dataset to empirically demonstrate this claim. 

\begin{comment}
\textcolor{red}{[TM: discuss advances; possibly merge below sentences]}
\end{comment}

%\textcolor{gre}{[HN: some note
%\begin{itemize}
%    \item 
%    Recent approaches: GraphSAGE, DGI, GIN, GAT, MixHop. They focuses on engineering the architecture to make some minor improvement. MixHop seems to be similar to some second order filter or stacked feature.
%    \item 
%    Some tried to solve the deepness problem of GCN (need citation). They believe depth leads to better accuracy (like CNN).
%    \item 
%    We propose an alternative view of Graph Neural Networks as signal denoising and classification. Proposed model name: \textbf{\emph{gf}NN} - Graph Filter Neural Network. If we agree on this name, TODO: change all SGCMLP to gfNN. 
%    \item
%    From our analysis, GNN consists of two steps: filter signals and classify based on transformed features. We address two problems with existing models: noise tolerant and difficulty to deal with complex feature space. We propose a straight forward model to deal with both these problems.
%\end{itemize}
%}

\section{Graph Signal Processing}

In this section, we introduce the basic concepts of graph signal processing.
We adopt a recent formulation~\cite{girault2018irregularity} of graph Fourier transform on irregular graphs.

Let $\mathcal{G} = (\mathcal{V}, \mathcal{E})$ be a simple undirected graph, where $\mathcal{V} = \{ 1, \dots, n \}$ be the set of $n \in \mathbb{Z}$ vertices and $\mathcal{E}$ be the set of edges.%
\footnote{We only consider unweighted edges but it is easily adopted to positively weighted edges.}
Let $A = (a_{ij}) \in \mathbb{R}^{n \times n}$ be the adjacency matrix of $G$,
$D = \mathrm{diag}(d(1), \dots, d(n)) \in \mathbb{R}^{n \times n}$ be the degree matrix of $G$, where $d(i) = \sum_{j \in \mathcal{V}} a(i,j)$ is the degree of vertex $i \in \mathcal{V}$.
$L = D - A \in \mathbb{R}^{n \times n}$ be the combinatorial Laplacian of $G$, 
$\mathcal{L} = I - D^{-1/2} A D^{-1/2}$ be the normalized Laplacian of $G$, where $I \in \mathbb{R}^{n \times n}$ is the identity matrix,
and $L_\text{rw} = I - D^{-1} A$ be the random walk Laplacian of $G$.
Also, for $\gamma \in \mathbb{R}$ with $\gamma > 0$, let $\tilde A = A + \gamma I$ be the \emph{augmented adjacency matrix}, which is obtained by adding $\gamma$ self loops to $G$, $\tilde D = D + \gamma I$ be the corresponding augmented degree matrix, and $\tilde L = \tilde D - \tilde A = L$, $\tilde{\mathcal{L}} = I - \tilde D^{-1/2} \tilde A \tilde D^{-1/2}$, $\L = I - \tilde D^{-1} \tilde A$ be the corresponding augmented combinatorial, normalized, and random walk Laplacian matrices.

%\textcolor{gre}{HN: Should the augmented normalized Laplacian be $\tilde{\mathcal{L}}$ instead of $\tilde{L}$?}
%\textcolor{red}{TM: $\tilde{\mathcal{L}}$; fixed}

A vector $x \in \mathbb{R}^n$ defined on the vertices of the graph is called a \emph{graph signal}. 
To introduce a graph Fourier transform, we need to define two operations, \emph{variation} and \emph{inner product}, on the space of graph signals.
Here, we define the variation $\Delta \colon \mathbb{R}^n \to \mathbb{R}$ and the $\tilde D$-inner product by
\begin{align}
    \Delta(x) := \sum_{(i, j) \in \mathcal{E}} (x(i) - x(j))^2 = x^\top L x; \ \
    (x, y)_{\tilde D} := \sum_{i \in \mathcal{V}} (d(i) + \gamma) x(i) y(i) = x^\top \tilde D y.
\end{align}
We denote by $\| x \|_{\tilde D} := \sqrt{ (x, x)_{\tilde D} }$ the norm induced by $\tilde D$.
Intuitively, the variation $\Delta$ and the inner product $(\cdot, \cdot)_{\tilde D}$ specify how to measure the smoothness and importance of the signal, respectively. 
In particular, our inner product puts more importance on high-degree vertices, where larger $\gamma$ closes the importance more uniformly.
We then consider the \emph{generalized eigenvalue problem (variational form)}:
Find $u_1, \dots, u_n \in \mathbb{R}^n$ such that for each $i \in \{1, \dots, n\}$, $u_i$ is a solution to the following optimization problem:
\begin{align}
\label{eq:generalized-eigenvalue-problem-variational-form}
    \text{minimize} \ \Delta(u) \ \ 
    \text{subject to} \ (u, u)_{\tilde D} = 1, \ (u, u_j)_{\tilde D} = 0, \ j \in \{1, \dots, n \}.
\end{align}
The solution $u_i$ is called an $i$-th generalized eigenvector and the corresponding objective value $\lambda_i := \Delta(u_i)$ is called the $i$-th \emph{generalized eigenvalue}.
The generalized eigenvalues and eigenvectors are also the solutions to the following \emph{generalized eigenvalue problem (equation form)}:
\begin{align}
\label{eq:generalized-eigenvalue-problem}
    L u = \lambda \tilde D u.
\end{align}
Thus, if $(\lambda, u)$ is a generalized eigenpair then $(\lambda, \tilde D^{1/2} u)$ is an eigenpair of $\tilde{\mathcal{L}}$.
A generalized eigenvector with a smaller generalized eigenvalue is smoother in terms of the variation $\Delta$.
Hence, the generalized eigenvalues are referred to as the \emph{frequency of the graph}.

The graph Fourier transform is a basis expansion by the generalized eigenvectors.
Let $U = [u_1, \dots, u_n]$ be the matrix whose columns are the generalized eigenvectors.
Then, the \emph{graph Fourier transform $\mathcal{F} \colon \mathbb{R}^n \to \mathbb{R}^n$} is defined by $\mathcal{F} x = \hat{x} := U^\top \tilde D x $, 
and the \emph{inverse graph Fourier transform $\mathcal{F}^{-1}$} is defined by $\mathcal{F}^{-1} \hat{x} = U \hat{x}$.
Note that these are actually the inverse transforms since $\mathcal{F} \mathcal{F}^{-1} = U^\top \tilde D U = I$.

The \emph{Parseval identity} relates the norms of the data and its Fourier transform:
\begin{align}
\label{eq:parseval}
    \| x \|_{\tilde D} = \| \hat{x} \|_2.
\end{align}

Let $h \colon \mathbb{R} \to \mathbb{R}$ be an analytic function.
The \emph{graph filter specified by $h$} is a mapping $x \mapsto y$ defined by the relation in the frequency domain: $\hat{y}(\lambda) = h(\lambda) \hat{x}(\lambda)$.
In the spatial domain, the above equation is equivalent to
$y = h(\L) x$.
where $h(\L)$ is defined via the Taylor expansion of $h$; see \cite{higham2008functions} for the detail of matrix functions.

In a machine learning problem on a graph, each vertex $i \in \mathcal{V}$ has a $d$-dimensional feature $\vec{x}(i) \in \mathbb{R}^d$.
We regard the features as $d$ graph signals and define the graph Fourier transform of the features by the graph Fourier transform of each signal.
Let $X = [\vec{x}(1); \dots, \vec{x}(n)]^\top$ be the feature matrix.
Then, the graph Fourier transform is represented by $\mathcal{F} X = \hat{X} =: U^\top \tilde D X$ and the inverse transform is $\mathcal{F}^{-1} \hat{X} = U \hat{X}$. We denote
$\hat{X} = [\hat{\vec{x}}(\lambda_1); \dots; \hat{\vec{x}}(\lambda_n)]^\top$ as the frequency components of $X$.

\section{Empirical Evidence of Assumption~\ref{asmp:low-frequency}} \label{sec:low-freq}

\begin{comment}
\textcolor{red}{[TM: after writing the draft of the introduction, I think this contents should be presented ASAP.]}
\textcolor{gre}{[HN: yes, I plotted the data, after the introduction I will polish the graph and put it here]}
\end{comment}

The results of this paper deeply depend on Assumption~\ref{asmp:low-frequency}. Thus, we first verify this assumption in real-world datasets: Cora, Citeseer, and Pubmed~\cite{coraciteseerpubmed}. These are citation networks, in which vertices are scientific papers and edges are citations. We consider the following experimental setting: \textbf{1}.~Compute the graph Fourier basis $U$ from $\tilde{\mathcal{L}}$; \textbf{2}.~Add Gaussian noise to the input features: $\mathcal{X} \leftarrow \mathcal{X} + \mathcal{N}(0, \sigma^2)$ for $\sigma = \{0, 0.01, 0.05\}$; \textbf{3}.~Compute the first $k$-frequency component: $\hat{\mathcal{X}}_k = U[:k]^\top\tilde{D}^{1/2}\mathcal{X}$; \textbf{4}.~Reconstruct the features: $\tilde{\mathcal{X}}_k = \tilde{D}^{-1/2}U[:k]\hat{\mathcal{X}}_k$; \textbf{5}.~Train and report test accuracy of a 2-layers perceptron on the reconstructed features $\tilde{\mathcal{X}}_k$.

\begin{figure}[h]
  \centering
  \includegraphics[width=\textwidth]{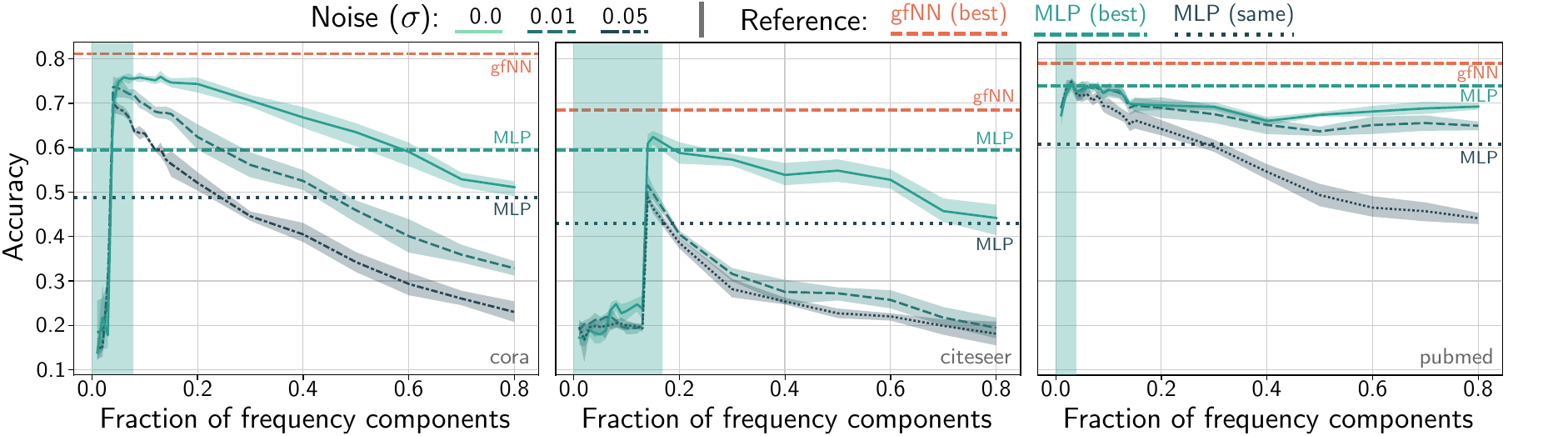}
  \caption{Average performance of 2-layers MLPs on frequency-limited feature vectors (\texttt{epochs=20}). The annotated ``(best)'' horizontal lines show the performance of the best performance using gfNN and a 2-layers MLP trained on the original features.}
  \label{f:ccp_freq}
\end{figure}

In Figure~\ref{f:ccp_freq}, we incrementally add normalized Laplacian frequency components to reconstruct feature vectors and train a 2-layers MLPs. We see that all three datasets exhibit a low-frequency nature. The classification accuracy of a 2-layers MLP tends to peak within the top 20\% of the spectrum (green boxes). By adding artificial Gaussian noise, we observe that the performance at low-frequency regions is relatively robust, which implies a strong denoising effect.

Interestingly, the performance gap between graph neural network and a simple MLP is much larger when high-frequency components do not contain useful information for classification. In the Cora dataset, high-frequency components only decrease classification accuracy. Therefore, our gfNN outperformed a simple MLP. Citeseer and Pubmed have a similar low-frequency nature. However, the performance gaps here are not as large. Since the noise-added performance lines for Citeseer and Pubmed generally behave like the original Cora performance line, we expect the original Citeseer and Pubmed contain much less noise compared with Cora. Therefore, we could expect the graph filter degree would have little effect in Citeseer and Pubmed cases.  

\section{Multiplying Adjacency Matrix is Low Pass Filtering}
\label{sec:lowpass}

% \textcolor{gre}{HN: Introduce different filters here?}

Computing the low-frequency components is expensive since it requires $O(|\mathcal{V}|^3)$ time to compute the eigenvalue decomposition of the Laplacian matrix.
Thus, a reasonable alternative is to use a low-pass filter.
Many papers on graph neural networks iteratively multiply the (augmented) adjacency matrix $\A$ (or $\tilde A$) to propagate information.
In this section, we see that this operation corresponds to a \emph{low-pass filter}.

Multiplying the normalized adjacency matrix corresponds to applying graph filter $h(\lambda) = 1 - \lambda$.
Since the eigenvalues of the normalized Laplacian lie on the interval $[0, 2]$, this operation resembles a \emph{band-stop filter} that removes intermediate frequency components.
However, since the maximum eigenvalue of the normalized Laplacian is $2$ if and only if the graph contains a non-trivial bipartite graph as a connected component~\cite[Lemma~1.7]{chung1997spectral}.
Therefore, for other graphs, multiplying the normalized (non-augmented) adjacency matrix acts as a low-pass filter (i.e., high-frequency components must decrease).

We can increase the low-pass filtering effect by adding self-loops (i.e., considering the augmented adjacency matrix) since it shrinks the eigenvalues toward zero as follows.%
\footnote{The shrinking of the maximum eigenvalue has already been proved in \cite[Theorem~1]{wu2019simplifying}. Our theorem is stronger than theirs since we show the ``monotone shrinking'' of ``all'' the eigenvalues. In addition, our proof is simpler and shorter.}
\begin{theorem}
\label{thm:spectrum-shrinking}
Let $\lambda_i(\gamma)$ be the $i$-th smallest generalized eigenvalue of $(\tilde D, L) = (D + \gamma I)$.
Then, $\lambda_i(\gamma)$ is a non-negative number, and monotonically non-increasing in $\gamma \ge 0$.
Moreover, $\lambda_i(\gamma)$ is strictly monotonically decreasing if $\lambda_i(0) \neq 0$.
\end{theorem}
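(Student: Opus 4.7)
The plan is to rely on the variational characterization of generalized eigenvalues:
\[
\lambda_i(\gamma) \;=\; \min_{\substack{S \subseteq \mathbb{R}^n \\ \dim S = i}} \; \max_{v \in S \setminus\{0\}} \; R_v(\gamma), \qquad R_v(\gamma) := \frac{v^\top L v}{v^\top (D + \gamma I) v},
\]
which follows from the equivalence of \eqref{eq:generalized-eigenvalue-problem-variational-form} and \eqref{eq:generalized-eigenvalue-problem} together with the standard Courant--Fischer principle applied to the symmetric matrix $\tilde D^{-1/2} L \tilde D^{-1/2} = \tilde{\mathcal{L}}$. Since the combinatorial Laplacian $L$ is positive semidefinite and $D+\gamma I$ is positive definite for $\gamma \ge 0$ (or after a trivial perturbation if some vertex has degree zero), non-negativity of $\lambda_i(\gamma)$ falls out of $R_v(\gamma) \ge 0$.

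For monotonicity, I would observe a pointwise fact: for any fixed $v \ne 0$, the numerator of $R_v(\gamma)$ does not depend on $\gamma$, while the denominator $v^\top(D+\gamma I)v = v^\top D v + \gamma \|v\|_2^2$ is strictly increasing in $\gamma$. Hence $\gamma \mapsto R_v(\gamma)$ is non-increasing, and strictly decreasing whenever $v^\top L v > 0$. Substituting this into the $\min\max$ formula preserves the inequality, yielding $\lambda_i(\gamma_2) \le \lambda_i(\gamma_1)$ for $\gamma_1 < \gamma_2$: pick the subspace $S^\star$ attaining $\lambda_i(\gamma_1)$, then
\[
\lambda_i(\gamma_2) \;\le\; \max_{v \in S^\star \setminus\{0\}} R_v(\gamma_2) \;\le\; \max_{v \in S^\star \setminus\{0\}} R_v(\gamma_1) \;=\; \lambda_i(\gamma_1).
\]

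The subtle part is upgrading this to strict monotonicity when $\lambda_i(0) \ne 0$. First I would note that the kernel of $L$ is independent of $\gamma$, so the number of zero generalized eigenvalues equals the number of connected components of $\mathcal{G}$ for every $\gamma$; hence $\lambda_i(0) > 0$ forces $\lambda_i(\gamma) > 0$ for all $\gamma \ge 0$. In particular $\lambda_i(\gamma_1) > 0$, which means the subspace $S^\star$ above must contain a vector $v^\star$ with $(v^\star)^\top L v^\star > 0$. The remaining issue, and the main technical obstacle, is converting the \emph{pointwise} strict inequality $R_v(\gamma_2) < R_v(\gamma_1)$ (valid whenever $v^\top L v > 0$) into a strict inequality for the maximum over $S^\star$.

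I would handle this via compactness. On vectors $v \in S^\star$ with $v^\top L v = 0$ we have $R_v(\gamma_2) = 0 < \lambda_i(\gamma_1)$, whereas on vectors with $v^\top L v > 0$ we have $R_v(\gamma_2) < R_v(\gamma_1) \le \lambda_i(\gamma_1)$. Thus $R_v(\gamma_2) < \lambda_i(\gamma_1)$ for every $v \in S^\star \setminus \{0\}$. Restricting to the unit sphere of $S^\star$ (compact, and $R_v$ is homogeneous of degree zero), continuity of $R_{\cdot}(\gamma_2)$ and compactness imply the maximum is attained and is therefore strictly less than $\lambda_i(\gamma_1)$. Combining with the first displayed inequality yields $\lambda_i(\gamma_2) < \lambda_i(\gamma_1)$, completing the proof.
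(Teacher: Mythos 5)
Your proof is correct and follows essentially the same route as the paper: the Courant--Fischer min-max characterization of the generalized eigenvalues combined with the pointwise monotonicity of the Rayleigh quotient $v^\top L v / v^\top(D+\gamma I)v$ in $\gamma$. Your compactness argument for upgrading pointwise strict decrease to strict decrease of the min-max value is a welcome addition, since the paper's own proof asserts this step without justification (and also contains a typo, writing $(D+\gamma I)^{1/2}L(D+\gamma I)^{1/2}$ where the exponents should be $-1/2$).
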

%\begin{proof}[Proof Sketch]
%We apply Courant--Fisher--Weyl's minimax characterization of (generalized) eigenvalues. See Appendix.
%\end{proof}
Note that $\gamma$ cannot be too large. 
Otherwise, all the eigenvalues would be concentrated around zero, i.e., all the data would be regarded as ``low-frequency''; hence, we cannot extract useful information from the low-frequency components. 

\begin{comment}
\textcolor{red}{TM: is the position of the following paragraph correct? 
In this paper, we study the three common choices for low pass filters: \emph{Left Norm}:~$\A = \tilde{D}^{-1} \tilde{A}$; \emph{Augumented Normalized Adjacency}:~$\tilde{\mathcal{A}} = \tilde{D}^{-1/2} \tilde{A} \tilde{D}^{-1/2}$; and \emph{Bilateral}:~$L_{\alpha} = \alpha (I - D^{-1/2}LD^{-1/2})$.
}
\end{comment}

%\begin{comment}
The graph filter $h(\lambda) = 1 - \lambda$ can also be derived from a first-order approximation of a Laplacian regularized least squares~\cite{belkin2004semi}.
Let us consider the problem of estimating a low-frequency true feature $\{ \bar{\vec{x}}(i) \}_{i \in \mathcal{V}}$ from the observation $\{ \vec{x}(i) \}_{i \in \mathcal{V}}$.
Then, a natural optimization problem is given by
\begin{align}
    \min_{\bar{x}} \sum_{i \in \mathcal{V}} \| \bar{\vec{x}}(i) - \vec{x}(i) \|_{\tilde D}^2 + \Delta(\bar{X})
\end{align}
where $\Delta(\bar{X}) = \sum_{p = 1}^d \Delta(\bar{\vec{x}}(i)_p)$.
Note that, since $\Delta(\bar{X}) = \sum_{\lambda \in \Lambda} \lambda \| \hat{\bar{\vec{x}}}(\lambda) \|_2^2$, it is a maximum a posteriori estimation with the prior distribution of $\hat{\bar{\vec{x}}}(\lambda) \sim N(0, I/\lambda)$.
The optimal solution to this problem is given by $\bar{X} = (I + \L)^{-1} X$.
The corresponding filter is $h'(\lambda) = (1 + \lambda)^{-1}$, and hence $h(\lambda) = 1 - \lambda$ is its first-order Taylor approximation.
%\end{comment}

\section{Bias-Variance Trade-off for Low Pass Filters}

In the rest of this paper, we establish theoretical results under Assumption~\ref{asmp:low-frequency}.
To be concrete, we pose a more precise assumption as follows.

\begin{assumption}[Precise Version of the First Part of Assumption~\ref{asmp:low-frequency}]
\label{asmp:low-frequency-precise}
Observed features $\{ \vec{x}(i) \}_{i \in \mathcal{V}}$ consists of true features $\{ \vec{\bar{x}}(i) \}_{i \in \mathcal{V}}$ and noise  $\{ \vec{z}(i) \}_{i \in \mathcal{V}}$.
The true features $\bar{X}$ have frequency at most $0 \le \epsilon \ll 1$ and the noise follows a \emph{white Gaussian noise}, i.e., each entry of the graph Fourier transform of $Z$ independently identically follows a normal distribution $N(0, \sigma^2)$.
\end{assumption}

Using this assumption, we can evaluate the effect of the low-pass filter as follows.
\begin{lemma}
\label{lem:bias-variance}
Suppose Assumption~\ref{asmp:low-frequency-precise}.
For any $0 < \delta < 1/2$, with probability at least $1 - \delta$, we have
\begin{align}
\label{eq:bias-variance}
    \| \bar{X} - \A^k X \|_D
    \le \sqrt{k \epsilon} \| \bar{X} \|_D
    + O\left(\sqrt{\log (1 / \delta) R(2 k)}\right) \E[ \| Z \|_D ],
\end{align}
where $R(2 k)$ is a probability that a random walk with a random initial vertex returns to the initial vertex after $2 k$ steps.
\end{lemma}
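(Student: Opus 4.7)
The plan is to write $X = \bar X + Z$ and decompose the error by the triangle inequality,
\begin{align*}
  \|\bar X - \A^k X\|_{\tilde D} \;\le\; \|(I - \A^k)\bar X\|_{\tilde D} + \|\A^k Z\|_{\tilde D},
\end{align*}
and then bound the two terms separately in the graph Fourier domain. This is the natural setting because $\A = I - \L$ acts diagonally on Fourier coefficients with eigenvalues $1 - \lambda_i$, and Parseval's identity \eqref{eq:parseval} turns the $\tilde D$-norm into the usual Euclidean norm on Fourier coefficients.

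For the bias term I would use the elementary scalar inequality
\begin{align*}
  \bigl(1 - (1-\lambda)^k\bigr)^2 \;\le\; 1 - (1-\lambda)^k \;\le\; k\lambda, \qquad \lambda \in [0,1],\ k \ge 1,
\end{align*}
where the first step is $a^2 \le a$ for $a \in [0,1]$ and the second is Bernoulli's inequality. Assumption~\ref{asmp:low-frequency-precise} forces $\hat{\bar{\vec x}}(\lambda_i) = 0$ whenever $\lambda_i > \epsilon$, so Parseval gives
\begin{align*}
  \|(I - \A^k)\bar X\|_{\tilde D}^2 \;=\; \sum_{\lambda_i \le \epsilon}\bigl(1 - (1-\lambda_i)^k\bigr)^2 \|\hat{\bar{\vec x}}(\lambda_i)\|^2 \;\le\; k\epsilon\,\|\bar X\|_{\tilde D}^2,
\end{align*}
which recovers the first term on the right of \eqref{eq:bias-variance} after taking square roots.

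For the variance term, Assumption~\ref{asmp:low-frequency-precise} says that the entries of $\hat Z$ are i.i.d.\ $N(0,\sigma^2)$, so Parseval yields $\|\A^k Z\|_{\tilde D}^2 = \sum_i (1-\lambda_i)^{2k}\,\|\hat{\vec z}(\lambda_i)\|^2$, and taking expectations gives
\begin{align*}
  \E\|\A^k Z\|_{\tilde D}^2 \;=\; d\sigma^2 \sum_i (1-\lambda_i)^{2k} \;=\; d\sigma^2 \operatorname{tr}(\A^{2k}).
\end{align*}
The key combinatorial identification is that $(\A^{2k})_{ii}$ is exactly the probability that a random walk on the augmented graph, started at $i$, returns to $i$ after $2k$ steps; averaging over a uniformly random starting vertex gives $\operatorname{tr}(\A^{2k}) = n\,R(2k)$, so $\E\|\A^k Z\|_{\tilde D}^2 = R(2k)\,\E\|Z\|_{\tilde D}^2$ after noting $\E\|Z\|_{\tilde D}^2 = nd\sigma^2$. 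To promote this expectation to a high-probability statement, I would apply a Laurent--Massart tail bound for weighted sums of independent $\chi^2_d$ variables (equivalently, Gaussian Lipschitz concentration applied to the $1$-Lipschitz map $\hat Z \mapsto \|\A^k Z\|_{\tilde D}$), which injects the $\sqrt{\log(1/\delta)}$ deviation factor.

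I expect the concentration step to be the main technical obstacle, because one must verify that the weighted chi-squared variance proxy $\sum_i(1-\lambda_i)^{4k}$ does not inflate the leading $\sqrt{R(2k)}$ scaling. Fortunately this is automatic: each $(1-\lambda_i)^{2k} \in [0,1]$ for the augmented normalized Laplacian, so $\sum_i(1-\lambda_i)^{4k} \le \sum_i(1-\lambda_i)^{2k} = nR(2k)$, and the Laurent--Massart fluctuation can be absorbed into the leading term. The remaining bookkeeping repackages the expectation bound and the concentration deviation into the single product $O\!\bigl(\sqrt{\log(1/\delta)\,R(2k)}\bigr)\,\E\|Z\|_{\tilde D}$ that appears in \eqref{eq:bias-variance}.
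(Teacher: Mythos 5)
Your proposal is correct and follows essentially the same route as the paper's proof: the same triangle-inequality decomposition of $\|\bar X - \A^k X\|_{\tilde D}$ into bias and variance terms, the same passage to the Fourier domain via Parseval, the same Laurent--Massart concentration for the weighted chi-squared sum, and the same identification $\mathrm{tr}(\A^{2k})/n = R(2k)$. Your explicit chain $(1-(1-\lambda)^k)^2 \le 1-(1-\lambda)^k \le k\lambda$ is exactly the right instantiation of the paper's Lemma~\ref{lem:laplacian-times-low-frequecy} needed to obtain the $\sqrt{k\epsilon}$ bias bound, and your observation that $\sum_i (1-\lambda_i)^{4k} \le \sum_i (1-\lambda_i)^{2k}$ correctly justifies absorbing the fluctuation term, a step the paper leaves implicit.
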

%\begin{proof}[Proof Sketch]
%We use the triangle inequality to decompose the left-hand side, Parseval's identity to evaluate in the frequency domain, and use the concentration inequality for a generalized $\chi^2$-distribution. See Appendix.
%\end{proof}
The first and second terms of the right-hand side of \eqref{eq:bias-variance} are the \emph{bias term} incurred by applying the filter and the \emph{variance term} incurred from the filtered noise, respectively.
Under the assumption, the bias increases a little, say, $O(\sqrt{\epsilon})$.
The variance term decreases in $O(1/\mathrm{deg}^{k/2})$ where $\mathrm{deg}$ is a typical degree of the graph since $R(2 k)$ typically behaves like $O(1 / \mathrm{deg}^k)$ for small $k$.%
\footnote{This exactly holds for a class of locally tree-like graphs~\cite{dembo2013factor}, which includes Erdos--Renyi random graphs and the configuration models.}
Therefore, we can obtain a more accurate estimation of the true data from the noisy observation by multiplying the adjacency matrix if the maximum frequency of $\bar{X}$ is much smaller than the \emph{noise-to-signal ratio} $\| Z \|_D / \| \bar{X} \|_D$.

This theorem also suggest a choice of $k$. By minimizing the right-hand side by $k$, we obtain:
\begin{corollary}
\label{cor:optimal-k}
Suppose that $\E[ \| Z \|_D ] \le \rho \| \bar{X} \|_D$ for some $\rho = O(1)$.
Let $k^*$ be defined by $k^* = O(\log (\log (1 / \delta) \rho / \epsilon ))$, and suppose that there exist constants $C_d$ and $\bar{d} > 1$ such that $R(2 k) \le C_d/\bar{d}^k$ for $k \le k^*$.
Then, by choosing $k = k^*$, the right-hand side of \eqref{eq:bias-variance} is $\tilde{O}(\sqrt{\epsilon})$.\footnote{$\tilde{O}(\cdot)$ suppresses logarithmic dependencies.}
\qed
\end{corollary}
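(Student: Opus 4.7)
The plan is to substitute the stated $k = k^*$ directly into the bound~\eqref{eq:bias-variance} from Lemma~\ref{lem:bias-variance} and check that both terms on the right-hand side end up of order $\tilde{O}(\sqrt{\epsilon})$ under the hypotheses $\E[\|Z\|_D] \le \rho \|\bar X\|_D$ with $\rho = O(1)$ and $R(2k) \le C_d/\bar d^{\,k}$ for $k \le k^*$. So this is really just a bias-variance balancing calculation: I pick $k^*$ as the order at which the (polynomially decaying) bias and the (exponentially decaying) variance meet.

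First, I would handle the bias term $\sqrt{k\epsilon}\,\|\bar X\|_D$. Since $k^* = O(\log(\log(1/\delta)\rho/\epsilon))$ is merely polylogarithmic in $1/\epsilon$ and $1/\delta$, we have
\begin{equation*}
\sqrt{k^*\epsilon}\,\|\bar X\|_D \;=\; \sqrt{\epsilon}\cdot O\!\bigl(\sqrt{\log\log(1/\epsilon) + \log\log(1/\delta)}\bigr)\cdot\|\bar X\|_D \;=\; \tilde{O}(\sqrt{\epsilon}),
\end{equation*}
since $\|\bar X\|_D$ is a problem-dependent constant and all the logarithms are absorbed into $\tilde{O}(\cdot)$.

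Next, I would bound the variance term $O(\sqrt{\log(1/\delta)\,R(2k^*)})\,\E[\|Z\|_D]$. Using $\E[\|Z\|_D] \le \rho\|\bar X\|_D$ with $\rho = O(1)$, and the hypothesis $R(2k^*) \le C_d \bar d^{-k^*}$, the term is at most
\begin{equation*}
O\!\left(\sqrt{\log(1/\delta)\,C_d\,\bar d^{-k^*}}\right)\rho\,\|\bar X\|_D.
\end{equation*}
This is $O(\sqrt{\epsilon})$ precisely when $\log(1/\delta)\,\rho^{2}\,\bar d^{-k^*} \lesssim \epsilon$, i.e.\ when $k^* \ge \log_{\bar d}\!\bigl(\log(1/\delta)\,\rho^{2}/\epsilon\bigr)$. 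That is exactly the content of the choice $k^* = O(\log(\log(1/\delta)\rho/\epsilon))$ (with the constant implicit in the $O$ adjusted by $1/\log\bar d$, which is finite because $\bar d > 1$). Taking the maximum of the two contributions yields the claimed $\tilde{O}(\sqrt{\epsilon})$ bound.

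There is no real obstacle here beyond bookkeeping: the only mildly delicate point is that the hypothesis $R(2k)\le C_d/\bar d^{\,k}$ is required only in the range $k \le k^*$, so one must verify that the particular $k^*$ selected above still lies in the regime where the exponential decay of return probabilities is valid; since $k^*$ is polylogarithmic, this is typically a mild condition on the graph and is exactly why the corollary is stated as a conditional statement. Everything else is algebraic simplification inside the $\tilde{O}$ notation.
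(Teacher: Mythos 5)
Your proof is correct and is exactly the intended argument: the paper marks this corollary with \qed and omits the proof entirely, since it is just the direct substitution of $k = k^*$ into the right-hand side of \eqref{eq:bias-variance} and the bias--variance balancing you carry out. The only nitpick is a harmless slip in your bias bound, where $\sqrt{k^*}$ should contribute a factor $O\bigl(\sqrt{\log(1/\epsilon) + \log\log(1/\delta) + \log\rho}\bigr)$ rather than $\sqrt{\log\log(1/\epsilon)+\log\log(1/\delta)}$; either way it is polylogarithmic and absorbed into $\tilde{O}(\cdot)$, so the conclusion stands.
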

This concludes that we can obtain an estimation of the true features with accuracy $\tilde{O}(\sqrt{\epsilon})$ by multiplying $\A$ several times.

\section{Graph Filter Neural Network}

In the previous section, we see that low-pass filtered features $\A^k X$ are accurate estimations of the true features with high probability. In this section, we analyze the performance of this operation.

Let the multi-layer (two-layer) perceptron be
\begin{align}
    h_\text{MLP}(X | W_1, W_2) &= \sigma_2(\sigma_1(X W_1) W_2),
\end{align}
where $\sigma_1$ is the entry-wise ReLU function, and $\sigma_2$ is the softmax function.
Note that both $\sigma_1$ and $\sigma_2$ are contraction maps, i.e., $\|\sigma_i(X) - \sigma(Y)\|_D \le \|X - Y\|_D$.

Under the second part of Assumption~\ref{asmp:low-frequency}, our goal is to obtain a result similar to $h(\bar{X} | W_1, W_2)$.
The simplest approach is to train a multi-layer perceptron $h_\text{MLP}(X | W_1, W_2)$ with the observed feature.
The performance of this approach is evaluated by
\begin{align}
\label{eq:naive-approach}
    \| h_\text{MLP}(\bar{X} | W_1, W_2) - h_\text{MLP}(X | W_1, W_2) \|_{\tilde D} \le \| Z \|_D \rho(W_1) \rho(W_2),
\end{align}
where $\rho(W)$ is the maximum singular value of $W$.
% This means that the results may be varied according to the noise level.

Now, we consider applying graph a filter $\A^k$ to the features, then learning with a multi-layer perceptron, i.e., $h_\text{MLP}(h(\L) X | W_1, W_2)$.
Using Lemma~\ref{lem:bias-variance}, we obtain the following result.
\begin{theorem}
\label{thm:MLP-vs-gfNN}
Suppose Assumption~\ref{asmp:low-frequency-precise} and conditions in Corollary~\ref{cor:optimal-k}. 
By choosing $k$ as Corollary~\ref{cor:optimal-k}, with probability at least $1 - \delta$ for $\delta < 1/2$, 
\begin{align}
    \| h_\text{MLP}(\bar{X} | W_1, W_2) - h_\text{MLP}(\A^2 X | W_1, W_2) \|_D = \tilde{O}(\sqrt{\epsilon}) \E[ \| Z \|_D ] \rho(W_1) \rho(W_2).
\end{align}
\qed
\end{theorem}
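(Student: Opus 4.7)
My plan is to reduce the claim to a bound on $\|\bar{X} - \A^k X\|_D$ by peeling off the two layers of the MLP one at a time, exploiting the stated contraction property of $\sigma_1$ and $\sigma_2$, and then to invoke Lemma~\ref{lem:bias-variance} together with Corollary~\ref{cor:optimal-k}. The whole argument is essentially a standard Lipschitz composition, so the work is mostly bookkeeping with the $D$-weighted norm.

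First I would apply the contraction of the outer softmax $\sigma_2$, obtaining $\|h_\text{MLP}(\bar{X}|W_1,W_2) - h_\text{MLP}(\A^k X|W_1,W_2)\|_D \le \|\sigma_1(\bar{X}W_1)W_2 - \sigma_1(\A^k X W_1)W_2\|_D$. Next I factor out the right multiplication by $W_2$ via the submultiplicative bound $\|YW_2\|_D \le \rho(W_2)\,\|Y\|_D$, which follows from the column-wise identity $\|X\|_D^2 = \mathrm{tr}(X^\top D X)$ and the definition of $\rho(W_2)$ as the maximum singular value. Then I apply the contraction of the inner ReLU $\sigma_1$, and finally factor out $W_1$ in exactly the same way, producing the clean chain $\|h_\text{MLP}(\bar{X}|W_1,W_2) - h_\text{MLP}(\A^k X|W_1,W_2)\|_D \le \rho(W_1)\rho(W_2)\,\|\bar{X} - \A^k X\|_D$.

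To finish, I would invoke Lemma~\ref{lem:bias-variance} with the choice $k = k^*$ from Corollary~\ref{cor:optimal-k}. Under the return-probability hypothesis $R(2k) \le C_d/\bar{d}^k$ and $\E[\|Z\|_D] \le \rho\|\bar{X}\|_D$ with $\rho = O(1)$, the corollary gives that the right-hand side of \eqref{eq:bias-variance} is $\tilde{O}(\sqrt{\epsilon})$, which after carrying the noise magnitude through the proof of the lemma rather than absorbing it into the constant becomes $\tilde{O}(\sqrt{\epsilon})\,\E[\|Z\|_D]$ with probability at least $1-\delta$. Substituting gives the stated bound.

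\textbf{Main obstacle.} There is no serious technical obstacle here; the real care is in the bookkeeping. The two places to be careful are (i) verifying that the matrix $D$-norm genuinely satisfies $\|YW\|_D \le \rho(W)\|Y\|_D$ under the column-wise extension the paper tacitly uses, and (ii) reconciling the two different presentations of the bias-variance bound: Corollary~\ref{cor:optimal-k} absorbs $\|\bar{X}\|_D$ (and hence $\E[\|Z\|_D]$, via $\rho = O(1)$) into the hidden constant, while the theorem statement keeps $\E[\|Z\|_D]$ explicit. I would resolve this by re-deriving the Corollary with $\E[\|Z\|_D]$ as an explicit factor rather than treating it as $\Theta(1)$. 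I would also note that the $\A^2$ appearing in the theorem statement should be read as $\A^{k^*}$ for the argument to close uniformly in $\epsilon$.
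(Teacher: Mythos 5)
Your proposal matches the paper's proof essentially verbatim: the paper also peels off the two layers via the contraction of $\sigma_1,\sigma_2$ and the singular-value bound to reduce the claim to $\|\bar{X} - \A^k X\|_D\,\rho(W_1)\rho(W_2)$, then invokes Lemma~\ref{lem:bias-variance} with the $k$ of Corollary~\ref{cor:optimal-k}. Your two bookkeeping caveats (the column-wise $\|YW\|_D \le \rho(W)\|Y\|_D$ bound and reading $\A^2$ as $\A^{k}$) are exactly the details the paper glosses over, so the argument is correct and the same.
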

This means that if the maximum frequency of the true data is small, we obtain a solution similar to the optimal one.

\paragraph{Comparison with GCN}

Under the same assumption, we can analyze the performance of a two-layers GCN.
The GCN is given by
\begin{align}
    h_\text{GCN}(X | W_1, W_2) &= \sigma_2(\A \sigma_1(\A X W_1) W_2).
\end{align}
\begin{theorem}
\label{thm:MLP-vs-GCN}
Under the same assumption to Theorem~\ref{thm:MLP-vs-gfNN}, we have
\begin{align}
    \| h_\text{MLP}(\bar{X} | W_1, W_2) - h_\text{GCN}(X | W_1, W_2) \|_D \le \tilde{O}(\sqrt[4]{\epsilon}) \E[ \| Z \|_D ] \rho(W_1) \rho(W_2).
\end{align}
\end{theorem}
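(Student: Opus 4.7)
The plan is to control the quantity by a triangle inequality that inserts the hybrid $h_\text{MLP}(\A X \mid W_1, W_2)$:
\[
\| h_\text{MLP}(\bar X) - h_\text{GCN}(X) \|_D \le \underbrace{\| h_\text{MLP}(\bar X) - h_\text{MLP}(\A X) \|_D}_{(\mathrm{I})} + \underbrace{\| h_\text{MLP}(\A X) - h_\text{GCN}(X) \|_D}_{(\mathrm{II})},
\]
suppressing the dependence on $(W_1, W_2)$. Term $(\mathrm{I})$ is exactly what Theorem~\ref{thm:MLP-vs-gfNN} already controls: successive use of the contraction of $\sigma_1, \sigma_2$ and of the bound $\|Y W_i\|_D \le \rho(W_i)\,\|Y\|_D$ collapses it to $\rho(W_1)\rho(W_2)\, \| \bar X - \A X \|_D$, which is controlled by Lemma~\ref{lem:bias-variance} with the filter depth chosen as in Corollary~\ref{cor:optimal-k}; this contributes an $\tilde O(\sqrt{\epsilon})\,\E[\|Z\|_D]$ term.

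Term $(\mathrm{II})$ isolates the effect of the outer $\A$ that the GCN applies around the ReLU. Since the two networks share the pre-activation $\sigma_1(\A X W_1)$, one application of $\sigma_2$-contraction plus $W_2$-boundedness yields
\[
(\mathrm{II}) \le \rho(W_2)\,\| (I - \A)\,\sigma_1(\A X W_1) \|_D = \rho(W_2)\,\| \L\,\sigma_1(\A X W_1)\|_D.
\]
I would then use the inequality $\| \L Y \|_D^2 \le 2\,\Delta(Y)$ (valid because $\L$ is self-adjoint in the $\tilde D$-inner product with spectrum in $[0,2]$), the 1-Lipschitz bound $\Delta(\sigma_1(\,\cdot\,)) \le \Delta(\,\cdot\,)$, and the right-multiplication bound $\Delta(Y W_1) \le \rho(W_1)^2\,\Delta(Y)$ to reduce $(\mathrm{II})$ to an estimate of $\Delta(\A X)$. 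Splitting $X = \bar X + Z$, the low-frequency assumption gives $\Delta(\A \bar X) \le \epsilon\, \|\bar X\|_{\tilde D}^2$, while the white-noise contribution $\E[\Delta(\A Z)]$ is handled by the same spectral calculation (weighting eigencomponents by $(1-\lambda)^2\lambda$) that underlies the variance term of Lemma~\ref{lem:bias-variance}.

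Combining both estimates and simplifying under the assumption $\E[\|Z\|_D] = O(\|\bar X\|_D)$ from Corollary~\ref{cor:optimal-k} will produce the claimed bound. The main obstacle is in Term $(\mathrm{II})$: the two $\A$-multiplications of the GCN straddle the nonlinearity, so Lemma~\ref{lem:bias-variance} cannot be applied at $k = 2$ as in the gfNN argument. Instead, controlling $\| \L\,\sigma_1(\A X W_1)\|_D$ forces us to pass through $\sqrt{\Delta(\cdot)}$, and this extra square root is what converts the $O(\sqrt{\epsilon})$-sized smoothness of $\A X W_1$ into the weaker $O(\sqrt[4]{\epsilon})$ rate in the final estimate — in contrast to the $O(\sqrt{\epsilon})$ rate achieved by Theorem~\ref{thm:MLP-vs-gfNN}, where both filterings act on the same side of the nonlinearity.
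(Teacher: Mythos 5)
Your proof is correct in substance but takes a genuinely different route from the paper's. The paper's triangle inequality inserts the hybrid $\sigma_2(\A\,\sigma_1(\bar{X}W_1)W_2)$, so the structural error it must control is $\|\L\,\sigma_1(\bar{X}W_1)\|_D$ --- the Laplacian acting on the activation of the \emph{clean} features --- and it bounds this via Lemma~\ref{lem:laplacian-times-activated-low-frequecy}, i.e.\ by truncating the spectrum of $\sigma_1(\bar{X}W_1)$ at frequency $\sqrt{\epsilon}$; that truncation is precisely where the $\epsilon^{1/4}$ in the statement originates. You instead insert $h_\text{MLP}(\A X)$ and must control $\|\L\,\sigma_1(\A XW_1)\|_D$, which you do through the chain $\|\L Y\|_D^2\le 2\Delta(Y)$, $\Delta(\sigma_1(Y))\le\Delta(Y)$, $\Delta(YW_1)\le\rho(W_1)^2\Delta(Y)$. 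Two remarks. First, your closing explanation of the rate is off: $\Delta(\A\bar{X})\le\epsilon\|\bar{X}\|_{\tilde D}^2$ is a \emph{quadratic} quantity, so taking the square root yields $O(\sqrt{\epsilon})\,\|\bar{X}\|_D$ for the signal part, not $O(\sqrt[4]{\epsilon})$; your Dirichlet-energy argument is actually \emph{sharper} than the paper's truncation lemma (which loses a square root in passing from $\|\sigma(X)-Y\|_D^2\le\sqrt{\epsilon}\|X\|_D^2$ to the norm), and since the theorem only claims an upper bound this is harmless --- but the fourth root does not come from where you say it does. Second, the price of your hybrid is that the noise $Z$ sits inside the $\L\,\sigma_1(\cdot)$ term, so you need the additional high-probability estimate of $\Delta(\A Z)$ (weights $(1-\lambda)^2\lambda$, giving $O(R(2))\,\E[\|Z\|_D^2]$) that you sketch; the paper avoids this entirely because its hybrid keeps only the deterministic $\bar{X}$ in that term, and pushes all the noise into $\|\bar{X}-\A X\|_D$, handled by Lemma~\ref{lem:bias-variance} with $k=1$ exactly as in your Term~$(\mathrm{I})$. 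Your final simplification to the stated form (absorbing $\|\bar{X}\|_D$ and $R(2)$ into $\tilde{O}(\sqrt[4]{\epsilon})\,\E[\|Z\|_D]$) is no looser than the paper's own.
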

%\begin{proof}[Proof Sketch]
%We prove that if $Y$ has frequency at most $\epsilon$, $\sigma(Y)$ is well approximated by a signal of frequency at most $\sqrt{\epsilon}$ (Lemma~\ref{lem:laplacian-times-activated-low-frequecy}). 
%We combine this lemma with Lemma~\ref{lem:bias-variance} to obtain the theorem.
%\end{proof}
This theorem means that GCN also has a performance similar to MLP for true features.
Hence, in the limit of $\epsilon \to 0$ and $Z \to 0$, all MLP, GCN, and the gfNN have the same performance.

In practice, gfNN has two advantages over GCN. First, gfNN is faster than GCN since it does not use the graph during the training. Second, GCN has a risk of overfitting to the noise. When the noise is so large that it cannot be reduced by the first-order low-pass filter, $\A$, the inner weight $W_1$ is trained using noisy features, which would overfit to the noise. We verify this in Section~\ref{exp:addgaussian}. 
%\textcolor{red}{TM: please fill} \textcolor{gre}{HN: added ref to exp}

\paragraph{Comparison with SGC}

Recall that a degree-2 SGC is given by
\begin{align}
    h_\text{SGC}(X | W_1) = \sigma_2(\A^2 X W_2),
\end{align}
i.e., it is a gfNN with a one-layer perceptron. Thus, by the same discussion, SGC has a performance similar to the perceptron (instead of the MLP) applied to the true feature. This clarifies an issue of SGC: if the true features are non-separable, SGC cannot solve the problem.
We verify this issue in experiment~\textbf{E2} (Section~\ref{exp:donuts}).  %\textcolor{red}{TM: please fill} \textcolor{gre}{HN: added ref to exp}

\section{Experiments}

To verify our claims in previous sections, we design two experiments. In experiment \textbf{E1}, we add different levels of white noise to the feature vectors of real-world datasets and report the classification accuracy among baseline models. Experiment \textbf{E2} studies an artificial dataset which has a complex feature space to demonstrate when simple models like SGC fail to classify.
  
There are two types of datasets: Real-worlds data (citation networks, social networks, and biological networks) commonly used for graph neural network benchmarking~\cite{coraciteseerpubmed, planetoid, zitnik2017predicting} and artificially synthesized random graphs from the two circles dataset. We created the graph structure for the two circles dataset by making each data point into a vertex and connect the 5 closest vertices in Euclidean distance. Table~\ref{t:data} gives an overview of each dataset.
\begin{comment}
\textcolor{gre}{HN: I tried to find the data for HRA, but the data was removed and I couldn't find it. I will find another dataset.}
\end{comment}

\begin{table}[h]
  \vspace{-0.5em}
  \caption{Real-world benchmark datasets and synthetic datasets for vertex classification} \label{t:data}
  \centering
  \begin{tabular}{llllll}
    \toprule
    \bf{Dataset}   & \bf{Nodes} & \bf{Edges} & \bf{Features} & \bf{Classes}  & \bf{Train/Val/Test Nodes} \\
    \midrule
    \bf{Cora}        & 2,708      & 5,278  & 1,433         & 7             & 140/500/1,000             \\
    \bf{Citeseer}   & 3,327      & 4,732      & 3,703         & 6             & 120/500/1,000             \\
    \bf{Pubmed}     & 19,717     & 44,338     & 500           & 3             & 60/500/1,000              \\
    \bf{Reddit}        & 231,443    & 11,606,919 & 602           & 41            & 151,708/23,699/55,334     \\
    \bf{PPI}        & 56,944    & 818,716 & 50           & 121            & 44,906/6,514/5,524     \\
    \midrule
    \bf{Two Circles}    & 4,000      & 10,000     & 2                    & 2 & 80/80/3,840 \\
    %\bf{HRA}            & 15,000     & 22,500     & 9                    & 2 & 300/300/14,400 \\
    \bottomrule
  \end{tabular}
  \vspace{-0.5em}
\end{table}

\subsection{Neural Networks}

We compare our results with a few state-of-the-art graph neural networks on benchmark datasets. We train each model using Adam optimizer~\cite{kingma2014adam} (\texttt{lr=0.2, epochs=50}). We use 32 hidden units for the hidden layer of GCN, MLP, and gfNN. Other hyperparameters are set similar to SGC~\cite{wu2019simplifying}.

\textbf{gfNN} We have three graph filters for our simple model: \emph{Left Norm} 
(\textcolor{ror}{$\blacktriangledown$}), \emph{Augumented Normalized Adjacency} 
(\textcolor{bor}{$\blacktriangle$}), and \emph{Bilateral} (\textcolor{yel}{$\blacklozenge$}).

\textbf{SGC \cite{wu2019simplifying}} Simple Graph Convolution (\textcolor{gre}{\small $\blacksquare$}) simplifies the Graph Convolutional Neural Network model by removing nonlinearity in the neural network and only averaging features.

\textbf{GCN \cite{gcn}} Graph Convolutional Neural Network (\textcolor{blu}{\tiny \XSolidBold{}}) is the most commonly used baseline.

\begin{figure}[h]
  \centering
  \includegraphics[width=\textwidth]{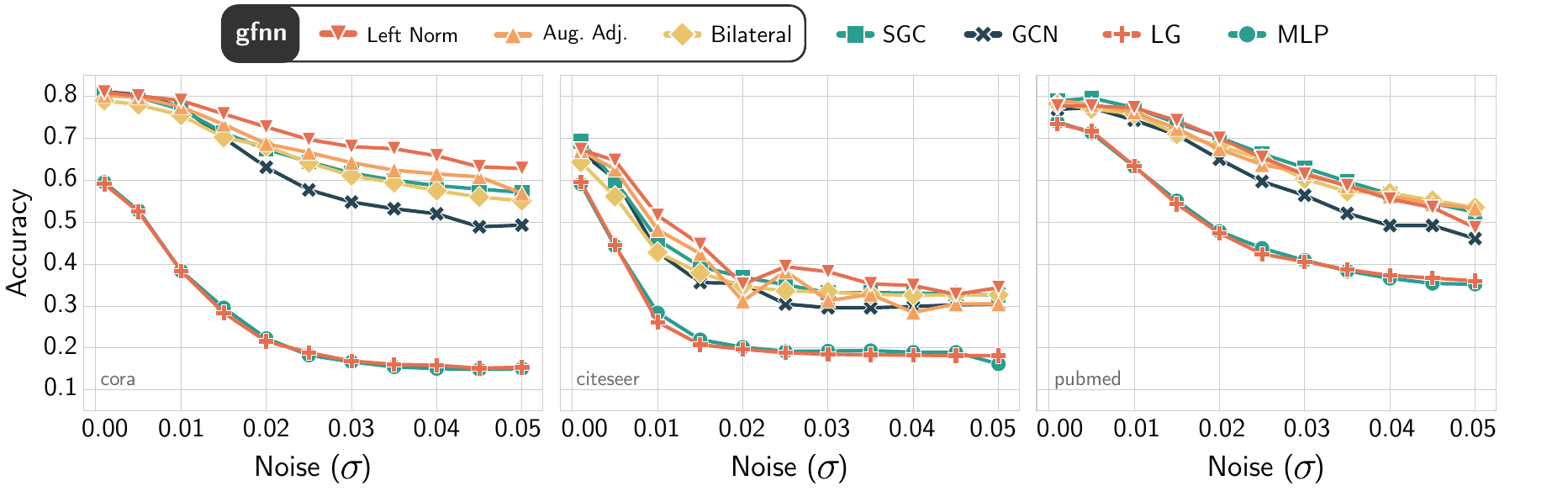}
  \caption{Benchmark test accuracy on Cora (left), Citeseer (middle), and Pubmed (right) datasets. The noise level is measured by standard deviation of white noise added to the features.}
  \label{f:ccp_noise}
\end{figure}

\subsection{Denoising Effect of Graph Filters} \label{exp:addgaussian}

For each dataset in Table \ref{t:data}, we introduce a white noise $\mathcal{N}(0, \delta^2)$ into the feature vectors with $\delta$ in range $(0.01, 0.05)$. According to the implications of Theorem~\ref{thm:MLP-vs-GCN} and Theorem~\ref{thm:MLP-vs-gfNN}, GCN should exhibit a low tolerance to feature noise due to its first-order denoising nature. As the noise level increases, we see in Figure~\ref{f:ccp_noise} that GCN, Logistic Regression (LR), and MLP tend to overfit more to noise. On the other hand, gfNN and SGC remain comparably noise tolerant.

\subsection{Expressiveness of Graph Filters} \label{exp:donuts}

Since graph convolution is simply denoising, we expect it to have little contribution to non-linearity learning. Therefore, we implement a simple 2-D two circles dataset to see whether graph filtering can have the non-linear manifold learning effect. Based on the Euclidean distance between data points, we construct a $k$-nn graph to use as the underlying graph structure. In this particular experiment, we have each data point connected to the 5 closest neighbors. Figure~\ref{f:donuts} demonstrates how SGC is unable to classify a simple 500-data-points set visually.

\begin{figure}[h]
  \centering
  \includegraphics[width=0.9\textwidth]{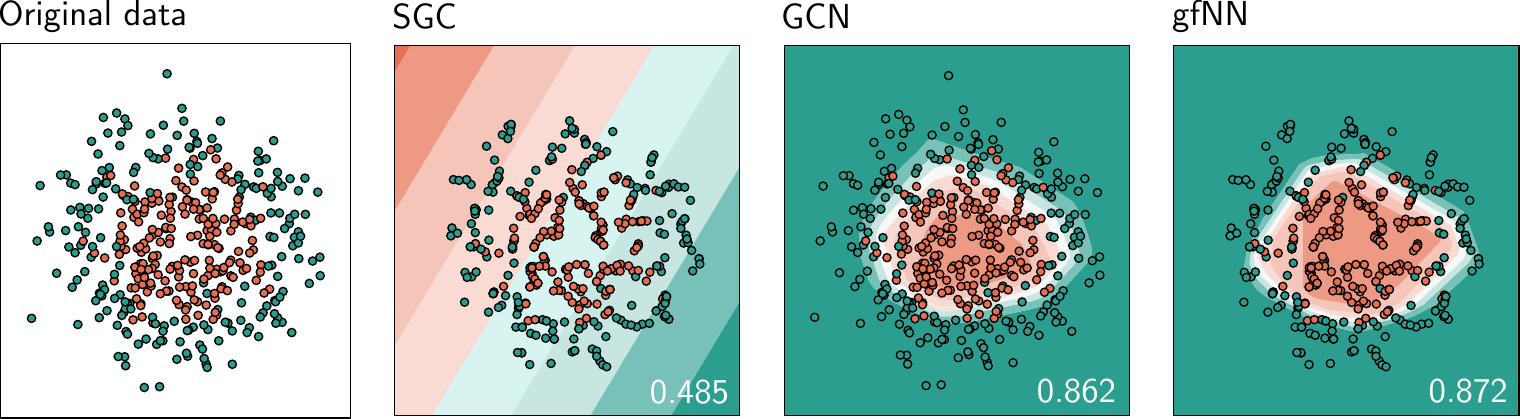}
  \caption{Decision boundaries on 500 generated data samples following the two circles pattern}
  \label{f:donuts}
\end{figure}

On the other hand, SGC, GCN, and our model gfNN perform comparably on most benchmark graph datasets. Table~\ref{t:modelcomp} compares the accuracy (f1-micro for Reddit) of randomized train/test sets. We see that all these graph neural networks have similar performance (some minor improvement might subject to hyperparameters tuning). Therefore, we believe current benchmark datasets for graph neural network have quite ``simple'' feature spaces. 

In practice, both of our analysis and experimental results suggest that the graph convolutional layer should be thought of simply as a denoising filter. In contrast to the recent design trend involving GCN~\cite{abu2019mixhop,valsesia2018learning} , our results imply that simply stacking GCN layers might only make the model more prone to feature noise while having the same expressiveness as a simple MLP.

\begin{table}[h]
  \vspace{-0.5em}
  \caption{Average test accuracy on random train/val/test splits (5 times)}\label{t:modelcomp}
  \centering
  \resizebox{\textwidth}{!}{% 
  \begin{tabular}{l|ll|llllll}
    \toprule
    \bf{}              & \bf{Denoise} & \bf{Classifier} & \bf{Cora}& \bf{Citeseer}&\bf{Pubmed}&\bf{Reddit}&\bf{PPI}& \bf{Two Circles} \\
    \midrule
    \bf{GCN}           & 1st order   & \textit{Non-linear} &$80.0\pm1.8$&${69.6}\pm1.1$&$79.3\pm1.3$&OOM &OOM& $83.9\pm3.1$  \\
    \bf{SGC}           & \textit{2nd order}   & Linear  &$77.6\pm2.2$ &$65.6\pm0.1$&$78.4\pm1.1$&${94.9}\pm1.2$& $62.0\pm1.2$  & $54.5\pm4.4$   \\
    \midrule
    \bf{gfNN}   & \textit{2nd order}   & \textit{Non-linear} &${80.9}\pm1.3$&$ 69.3\pm1.1$&${81.2}\pm1.5$&$92.4\pm2.2$& ${62.2}\pm1.5$ &${84.6}\pm2.5$  \\
    \bottomrule
  \end{tabular}%
  }
  \vspace{-0.5em}
\end{table}

\begin{comment}
\subsection{Effect of Self-loop on Connected Graphs} \label{exp:coraconnected}

Experiment. la

\section{Conclusion}

\textcolor{gre}{HN TODO: Summarize \cite{kawamoto2018mean}, etc. }

\section{Related Work}

\begin{itemize}
  \item Mean-field theory of graph neural networks in graph partitioning \cite{kawamoto2018mean}
  \item Diffusion-Convolutional Neural Networks \cite{atwood2016diffusion}
  \item Etc.
\end{itemize}

\end{comment}

\section{Discussion}

There has been little work addressed the limits of the GCN architecture. \citet{kawamoto2018mean} took the mean-field approach to analyze a simple GCN model to statistical physics. They concluded that backpropagation improves neither accuracy nor detectability of a GCN-based GNN model. \citet{li2018deeper} empirically analyzed GCN models with many layers under the limited labeled data setting and stated that GCN would not do well with little labeled data or too many stacked layers. While these results have provided an insightful view for GCN, they have not insufficiently answered the question: \emph{When we should we use GNN?}.

Our results indicate that we \emph{should} use the GNN approach to solve a given problem if Assumption~\ref{asmp:low-frequency} holds. From our perspective, GNNs derived from GCN simply perform noise filtering and learn from denoised data. Based on our analysis, we propose two cases where GCN and SGC might fail to perform: noisy features and nonlinear feature spaces. In turn, we propose a simple approach that works well in both cases.

Recently GCN-based GNNs have been applied in many important applications such as point-cloud analysis \cite{valsesia2018learning} or weakly-supervised learning \cite{garcia2017few}. As the input feature space becomes complex, we advocate revisiting the current GCN-based GNNs design. Instead of viewing a GCN layer as the convolutional layer in computer vision, we need to view it simply as a denoising mechanism. Hence, simply stacking GCN layers only introduces overfitting and complexity to the neural network design.

\pagebreak
%\subsubsection*{Acknowledgments}
%HN is supported by the Japanese Government Scholarship (Monbukagakusho:MEXT) for Ph.D. students. TM is supported by Kakenhi JPXXXX. We are also thankful for the fruitful comments from Kaushalya Madhawa, Matthias Springer, Tu Nguyen, Jonathon Tanks, Tiphaine Viard, and Pimprenelle Parmentier.

\bibliographystyle{plainnat}
\bibliography{neurips_2019}

\clearpage

\appendix

\section{Proofs}

\begin{proof}[Proof of Theorem~\ref{thm:spectrum-shrinking}]
Since the generalized eigenvalues of $(D + \gamma I, L)$ are the eigenvalues of a positive semidefinite matrix $(D + \gamma I)^{1/2} L (D + \gamma I)^{1/2}$, these are non-negative real numbers.
To obtain the shrinking result, we use the Courant--Fisher--Weyl's min-max principle~\cite[Corollary III. 1.2]{bhatia2013matrix}:
For any $0 \le \gamma_1 < \gamma_2$, 
\begin{align}
    \lambda_i(\gamma_2) 
    &= \min_{H: \text{subspace}, \mathrm{dim}(H) = i} \max_{x \in H, x \neq 0} \frac{x^\top L x}{x^\top (D + \gamma_2 I) x} \\
    &\le \min_{H: \text{subspace}, \mathrm{dim}(H) = i} \max_{x \in H, x \neq 0} \frac{x^\top L x}{x^\top (D + \gamma_1 I) x} \\
    &= \lambda_i(\gamma_1).
\end{align}
Here, the second inequality follows because $x^\top (D + \gamma_1) x < x^\top (D + \gamma_2) x$ for all $x \neq 0$
Hence, the inequality is strict if $x^\top L x \neq 0$, i.e., $\lambda_i(\gamma_1) \neq 0$.
\end{proof}

\begin{lemma}
\label{lem:laplacian-times-low-frequecy}
If $X$ has a frequency at most $\epsilon$ then $ \| h(\mathcal{L}) X \|_D^2 \le \max_{t \in [0, \epsilon]} h(t) \| X \|_D^2$.
\end{lemma}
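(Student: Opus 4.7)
The plan is to reduce the statement to a straightforward computation in the frequency domain via Parseval's identity. Recall from Section~2 that a graph signal $x$ has Fourier coefficients $\hat{x} = U^\top \tilde D x$, that filters act diagonally in the Fourier basis as $\widehat{h(\mathcal{L})x}(\lambda)=h(\lambda)\hat{x}(\lambda)$, and that $\|x\|_{\tilde D}^2 = \|\hat{x}\|_2^2$. The hypothesis "$X$ has frequency at most $\epsilon$" means that for every column $\vec{x}_p$ of $X$, the Fourier coefficient $\hat{\vec{x}}_p(\lambda_i)$ vanishes whenever $\lambda_i > \epsilon$.

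The key steps are as follows. First, I would apply Parseval columnwise, writing
\begin{align*}
\|h(\mathcal{L})X\|_D^2 \;=\; \sum_p \|h(\mathcal{L})\vec{x}_p\|_{\tilde D}^2 \;=\; \sum_p \sum_{i=1}^n h(\lambda_i)^2\,|\hat{\vec{x}}_p(\lambda_i)|^2.
\end{align*}
Second, I would use the frequency-support assumption to restrict the inner sum to indices with $\lambda_i \in [0,\epsilon]$, so that each factor $h(\lambda_i)^2$ can be bounded by $\max_{t\in[0,\epsilon]} h(t)^2$. Pulling this maximum out and using Parseval in the reverse direction yields
\begin{align*}
\sum_p \sum_{i:\lambda_i\le\epsilon} h(\lambda_i)^2\,|\hat{\vec{x}}_p(\lambda_i)|^2 \;\le\; \max_{t\in[0,\epsilon]} h(t)^2 \,\sum_p \sum_i |\hat{\vec{x}}_p(\lambda_i)|^2 \;=\; \max_{t\in[0,\epsilon]} h(t)^2 \,\|X\|_D^2,
\end{align*}
which is the claimed bound (the statement as written appears to be missing a square on the right-hand side; the natural reading gives $\max_{t\in[0,\epsilon]} h(t)^2$).

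There is no real technical obstacle here; the only care is bookkeeping. I need to be explicit that the "frequency" is measured with respect to the generalized eigenproblem $Lu=\lambda \tilde D u$ from \eqref{eq:generalized-eigenvalue-problem}, so that the $\tilde D$-weighted Parseval identity \eqref{eq:parseval} applies verbatim, and that $h(\mathcal{L})$ is diagonalized by the same basis $U$ (via its Taylor series, as in the definition of matrix functions). The columnwise extension is immediate because the Fourier transform of the feature matrix is defined entry-by-entry on columns. The resulting estimate is exactly what Lemma~\ref{lem:bias-variance} will need, since applying it with $h(\lambda)=(1-\lambda)^k$ controls the bias incurred by $\A^k$ on the low-frequency true signal $\bar X$.
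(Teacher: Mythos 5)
Your proof is correct and follows essentially the same route as the paper's: expand $\|h(\mathcal{L})X\|_D^2$ via the Parseval identity, use the frequency-support assumption to restrict the sum to $\lambda \le \epsilon$, pull out the maximum of the filter response, and apply Parseval again. Your observation that the right-hand side should carry $\max_{t\in[0,\epsilon]} h(t)^2$ rather than $\max_{t\in[0,\epsilon]} h(t)$ is well taken --- the paper's own proof writes $h(\lambda)$ where $h(\lambda)^2$ should appear, and the stated form is only recovered because the relevant filters satisfy $0 \le h(t) \le 1$ on $[0,\epsilon]$.
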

\begin{proof}
By the Parseval identity,
\begin{align}
    \| \mathcal{L} X \|_D^2 
    &= \sum_{\lambda \in \Lambda} h(\lambda) \| \hat{\vec{x}}(\lambda) \|_2^2 \\
    &\le \max_{t \in [0,\epsilon]} h(t) \sum_{\lambda \in \Lambda} \| \hat{\vec{x}}(\lambda) \|_2^2 \\
    &= \max_{t \in [0,\epsilon]} h(t) \| \hat{X} \|_2^2  \\
    &= \max_{t \in [0,\epsilon]} h(t) \| \hat{X} \|_D^2.
\end{align}
\end{proof}

\begin{proof}[Proof of Lemma~\ref{lem:bias-variance}]
By substituting $X = \bar{X} + Z$, we obtain
\begin{align}
    \| \bar{X} - \A^k X \|_{\tilde D}
    &\le
    \| (I - \A^k) \bar{X} \|_{\tilde D}
    +
    \| \A^k Z \|_{\tilde D}.
\end{align}
By Lemma~\ref{lem:laplacian-times-low-frequecy}, the first term is bounded by $\sqrt{k \epsilon} \| \bar{X} \|_D$.
By the Parseval identity~\eqref{eq:parseval}, the second term is evaluated by
\begin{align}
    \| \A^k Z \|_{\tilde D}^2 
    &= \sum_{\lambda \in \Lambda} (1 - \lambda)^{2 k} \| \hat{\vec{z}}(\lambda) \|_2^2 \\
    &= \sum_{\lambda \in \Lambda, p \in \{1, \dots, d\}} (1 - \lambda)^{2 k} \hat{z}(\lambda, p)^2
\end{align}
By \cite[Lemma~1]{laurent2000adaptive}, we have
\begin{align}
    \mathrm{Prob}\left\{ \sum_{\lambda,p} (1 - \lambda)^{2 k} (\hat{z}(\lambda, p)^2/\sigma^2 - 1) \ge 2 \sqrt{ t \sum_{\lambda,p} (1 - \lambda)^{4 k}} + 2 t \right\} \le e^{-t}.
\end{align}
By substituting $t = \log (1 / \delta)$ with $\delta \le 1/2$, we obtain
\begin{align}
    \mathrm{Prob}\left\{ \sum_{\lambda,p} (1 - \lambda)^{2 k} \hat{z}(\lambda, p)^2 \ge 5 \sigma^2 n d \log (1/\delta) \frac{\sum_{\lambda} (1 - \lambda)^{2 k}}{n} \right\} \le 1 - \delta.
\end{align}
Note that $\E[ \| Z \|_D^2 = \sigma^2 n d $, and 
\begin{align}
    \frac{\sum_{\lambda} (1 - \lambda)^{2 k}}{n} = \frac{\mathrm{tr}(\A^{2k})}{n} = R(2 k)
\end{align}
since $(i, j)$ entry of $\A^{2 k}$ is the probability that a random walk starting from $i \in \mathcal{V}$ is on $j \in \mathcal{V}$ at $2 k$ step, we obtain the lemma.
\end{proof}

\begin{proof}[Proof of Theorem~\ref{thm:MLP-vs-gfNN}]
By the Lipschitz continuity of $\sigma$, we have
\begin{align}
    \| h_\text{MLP}(\bar{X} | W_1, W_2) - h_\text{MLP}(\A^k X | W_1, W_2) \|_D 
    &\le \| \bar{X} - \A^k X \|_D \rho(W_1) \rho(W_2).
\end{align}
By using Lemma~\ref{lem:bias-variance}, we obtain the result.
\end{proof}

\begin{lemma}
\label{lem:laplacian-times-activated-low-frequecy}
If $X$ has a frequency at most $\epsilon$ then there exists $Y$ such that $Y$ has a frequency at most $\sqrt{\epsilon}$ and $\| \sigma(X) - Y \|_D^2 \le \sqrt{\epsilon} \| X \|_D^2$.
\end{lemma}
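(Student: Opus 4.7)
My plan is to take $Y$ to be the low-pass projection of $\sigma(X)$ onto the subspace spanned by generalized eigenvectors whose frequencies are at most $\sqrt{\epsilon}$. By construction, $Y$ has frequency at most $\sqrt{\epsilon}$, so only the approximation bound must be established. The route is a Markov-type inequality in the frequency domain: the energy of $\sigma(X)$ sitting above frequency $\sqrt{\epsilon}$ is controlled by the Dirichlet energy (variation) of $\sigma(X)$ divided by $\sqrt{\epsilon}$, and the variation of $\sigma(X)$ can in turn be bounded by the variation of $X$, which is small because $X$ itself is low-frequency.

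The concrete steps are as follows. First, using the Parseval identity~\eqref{eq:parseval} in the form adapted to the matrix $X$, write
\begin{align}
\| \sigma(X) - Y \|_D^2 \;=\; \sum_{\lambda > \sqrt{\epsilon}} \| \widehat{\sigma(X)}(\lambda) \|_2^2
\;\le\; \frac{1}{\sqrt{\epsilon}} \sum_{\lambda > \sqrt{\epsilon}} \lambda \, \| \widehat{\sigma(X)}(\lambda) \|_2^2
\;\le\; \frac{1}{\sqrt{\epsilon}}\, \Delta(\sigma(X)),
\end{align}
which is the Markov-type step (each $\lambda$ in the summation range satisfies $\lambda/\sqrt{\epsilon}\ge 1$, and the full sum over $\lambda$ in the frequency expansion of the variation only adds non-negative terms). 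Second, use the edge-wise 1-Lipschitz property of the ReLU: for every edge $(i,j)\in\mathcal{E}$ and every coordinate $p$, $(\sigma(X(i,p))-\sigma(X(j,p)))^2 \le (X(i,p)-X(j,p))^2$, so summing gives $\Delta(\sigma(X))\le \Delta(X)$. Third, apply Lemma~\ref{lem:laplacian-times-low-frequecy} (or directly Parseval together with the frequency bound on $X$) to get $\Delta(X)\le \epsilon\,\|X\|_D^2$. Chaining the three inequalities yields $\|\sigma(X)-Y\|_D^2 \le \sqrt{\epsilon}\,\|X\|_D^2$.

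The main obstacle is the first step, the Markov-type passage from the excess-energy sum to the variation. One has to be careful that the variation decomposes coordinate-wise as a spectral sum $\Delta(\sigma(X))=\sum_\lambda \lambda\|\widehat{\sigma(X)}(\lambda)\|_2^2$ with the same Fourier conventions used in Lemma~\ref{lem:laplacian-times-low-frequecy}; this is exactly the identity that Parseval plus the generalized eigenvalue equation~\eqref{eq:generalized-eigenvalue-problem} delivers, but it is the only nontrivial bookkeeping in the proof. The contractivity of $\sigma$ under $\Delta$ is then immediate because $\Delta$ is an edge-wise sum of squared differences and ReLU is 1-Lipschitz, and the final bound $\Delta(X)\le\epsilon\|X\|_D^2$ is a one-line consequence of $X$ being supported on frequencies $\le\epsilon$.
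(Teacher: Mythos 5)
Your proposal is correct and follows essentially the same route as the paper's own proof: define $Y$ by truncating $\sigma(X)$ above frequency $\sqrt{\epsilon}$, apply the Markov-type bound $\sum_{\lambda>\sqrt{\epsilon}}\|\widehat{\sigma(X)}(\lambda)\|_2^2\le \Delta(\sigma(X))/\sqrt{\epsilon}$, use the edge-wise $1$-Lipschitz property of ReLU to get $\Delta(\sigma(X))\le\Delta(X)$, and finish with $\Delta(X)\le\epsilon\|X\|_D^2$. Your write-up actually makes the spectral identity $\Delta(\sigma(X))=\sum_\lambda\lambda\|\widehat{\sigma(X)}(\lambda)\|_2^2$ more explicit than the paper does, but the argument is the same.
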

\begin{proof}
We choose $Y$ by truncating the frequency components greater than $\sqrt{\epsilon}$. 
Then,
\begin{align}
    \| \sigma(X) - Y \|_D^2 
    &= \sum_{\lambda > \sqrt{\epsilon}} \| \widehat{\sigma(X)}(\lambda) \|^2 \\
    &\le \frac{1}{\sqrt{\epsilon}} \sum_{\lambda} \lambda \| \widehat{\sigma(X)}(\lambda) \|^2 \\
    &= \frac{1}{\sqrt{\epsilon}} \sum_{(u, v) \in E} \| \sigma(x(u)) - \sigma(x(v)) \|_2^2 \\
    &\le \frac{1}{\sqrt{\epsilon}} \sum_{(u, v) \in E} \| x(u) - x(v) \|_2^2 \\
    &= \sqrt{\epsilon} \sum_{\lambda} \lambda \| \hat{x}(\lambda) \|_2^2 \\
    &\le \sqrt{\epsilon} \sum_{\lambda} \| \hat{x}(\lambda) \|_2^2 \\
    &= \sqrt{\epsilon} \| X \|_D^2.
\end{align}
\end{proof}

\begin{proof}[Proof of Theorem~\ref{thm:MLP-vs-GCN}]
\begin{align}
     &\| \sigma(\bar{X} W_1) W_2 - \A \sigma(\A X W_1) W_2 \|_D \\
     &\le 
     \left( \| \sigma(\bar{X} W_1) - \A \sigma(\bar{X} W_1) \|_D
     +
     \| \A \sigma(\bar{X} W_1) - \A \sigma(\A X W_1) \|_D \right) \rho(W_2) \\
     &= \left( \| \L \sigma(\bar{X} W_1) \|_D + \| \bar{X} - \A X \|_D \rho(W_1) \right) \rho(W_2) \\
     &\le \left( O(\epsilon^{1/4}) \| \bar{X} \|_D + R(2) \| Z \|_D \right) \rho(W_1) \rho(W_2).
\end{align}
\end{proof}

\end{document}